\newtheorem{theorem}{Theorem}
\newtheorem{lemma}{Lemma}
\newtheorem{corollary}{Corollary}
\newtheorem{definition}{Definition}
\begin{document}

\title{Simplified PCNet with Robustness}

\author{Bingheng Li*, Xuanting Xie*, Haoxiang Lei, Ruiyi Fang, and Zhao Kang\thanks{*These authors contributed equally.}\thanks{B. Li, X. Xie, H. Lei, Z. Kang are with the School of Computer Science and Engineering, University of Electronic Science and Technology of China, Chengdu, China (e-mail: bingheng86@gmail.com; x624361380@outlook.com; 2021080301020@std.uestc.edu.cn; zkang@uestc.edu.cn;).}\thanks{R. Fang is with western university (email: rfang32@uwo.ca).}}

% The paper headers
\markboth{Journal of \LaTeX\ Class Files,~Vol.~14, No.~8, August~2021}%
{Shell \MakeLowercase{\textit{et al.}}: A Sample Article Using IEEEtran.cls for IEEE Journals}

%\IEEEpubid{0000--0000/00\$00.00~\copyright~2021 IEEE}
% Remember, if you use this you must call \IEEEpubidadjcol in the second
% column for its text to clear the IEEEpubid mark.

\maketitle

\begin{abstract}
Graph Neural Networks (GNNs) have garnered significant attention for their success in learning the representation of homophilic or heterophilic graphs. However, they cannot generalize well to real-world
graphs with different levels of homophily. In response, the Possion-Charlier Network (PCNet) \cite{li2024pc}, the previous work,  allows graph representation to be learned from heterophily to homophily. Although PCNet alleviates the heterophily issue, there remain some challenges in further improving the efficacy and efficiency. In this paper, we simplify PCNet and enhance its robustness. We first extend the filter order to continuous values and reduce its parameters. Two variants with adaptive neighborhood sizes are implemented. Theoretical analysis shows our model's robustness to graph structure perturbations or adversarial attacks. We validate our approach through semi-supervised learning tasks on various datasets representing both homophilic and heterophilic graphs. %The code has been released in https://github.com/uestclbh/SPC-Net.
\end{abstract}
%To address this problem, some methods expand the multi-hop neighbors to incorporate more homophilic nodes.
\begin{IEEEkeywords}
Graph filtering, Heterophily, Polynomial approximation, Adversarial attack, Spectral method
\end{IEEEkeywords}

\section{Introduction}
Graph serves as powerful mathematical representations that capture intricate interactions
among entities and has seen a surge in interest due to its ubiquity \cite{wang2017community}. For example, a social network characterizes people's mutual relations; a protein network depicts the chemical compound's linking relationships. The graph usually contains two types of information: node attributes and graph structure. Graph neural networks (GNNs) are dedicated tools for dealing with them and have shown promising results in graph representation learning \cite{GCN, SGC, wei2023stgsa}. The most essential component of GNNs is the well-designed message-passing mechanism, which smooths the signals with neighboring nodes. It is capable of using both the topology structure and the node feature in a coordinated and flexible manner. % and is highly desirable because it exploits their latent relationship to enhance learning representations.

Although GNNs are making an impact in the real world, they have been shown to work well only when the homophily assumption is satisfied, that is, edges are more likely to exist between nodes belonging to the same class \cite{APPNP, li2021beyond}. Real-world data are complex, and heterophilic graphs, that is, connected nodes tend to be of different classes, are also ubiquitous \cite{zhu2020beyond}. The GNN architecture that aggregates information from local neighbors fails to handle heterophily \cite{zhu2020beyond}. Therefore, determining the appropriate neighborhood size for different nodes becomes the core of many research endeavors. %It would be more practical if the neighborhood size could be adaptive. %Besides,  For instance, individuals often exhibit a preference for connecting with other individuals of the opposite sex in dating networks, while various classes of amino acids demonstrate a higher likelihood of connecting within numerous protein structures . Methods designed for homophilic graphs are not universally applicable because they fail to effectively learn the node representations on heterophilic graphs. Simply stacked MLP can perform better than many GNNs in heterophilic cases \cite{zhu2020beyond}.  This problem can restrict the direct use of GNNs in real-world cases.

%To tackle the issue of heterophily, recent studies have proposed using high-pass convolutional filters and enlarge the receptive field. High-pass filters can move a node's representations away from those of its neighbors, whereas traditional GNNs employ low-pass filters that operate conversely. These methods are proposed to combine low- and high-pass filter together to make a node's homophilic neighbors close while push heterophilic neighbors away. Besides, nearby nodes can be very different while distant nodes may exhibit a certain similarity on a heterophilic graph. Therefore, it's necessary to expand a node's receptive field to locate its homophilic neighbors. 

Many works have been proposed to incorporate multi-hop neighbors to tackle heterophily \cite{EvenNet}. In some mild situations, two-hop neighbors tend to be dominated by homophilic nodes \cite{zhu2020beyond}. Distant neighbors are propagated by applying multiple GNN layers jointly, which faces performance degradation if stacking more than four layers \cite{li2018deeper}. As a result, they can only capture local structures and cannot learn global information, since long-path attachment is ignored at the local level \cite{nam2023global}. In fact, regardless of the number of expanded hops, some neighboring nodes are still missed \cite{Global-hete}. Thus, it is necessary to consider global information. Recently, \cite{grand,grand++} create a topological representation of the whole graph structure using a single global matrix to implicitly perform multiple aggregations. However, they are based on complicated designs to reduce complexity and ignore heterophily. 

%We illustrate our SPCNet-$D$ and GCN method on correct node classification with different degrees of homophily. The results are shown . It can be seen that our method stably classifies nodes with different homophilic edges, which verifies that proposed global filter can successfully tackle heterophily and our method is more practical in real-world complex cases.

\begin{figure}[t]
    \centering
    \includegraphics[width=1.0\linewidth]{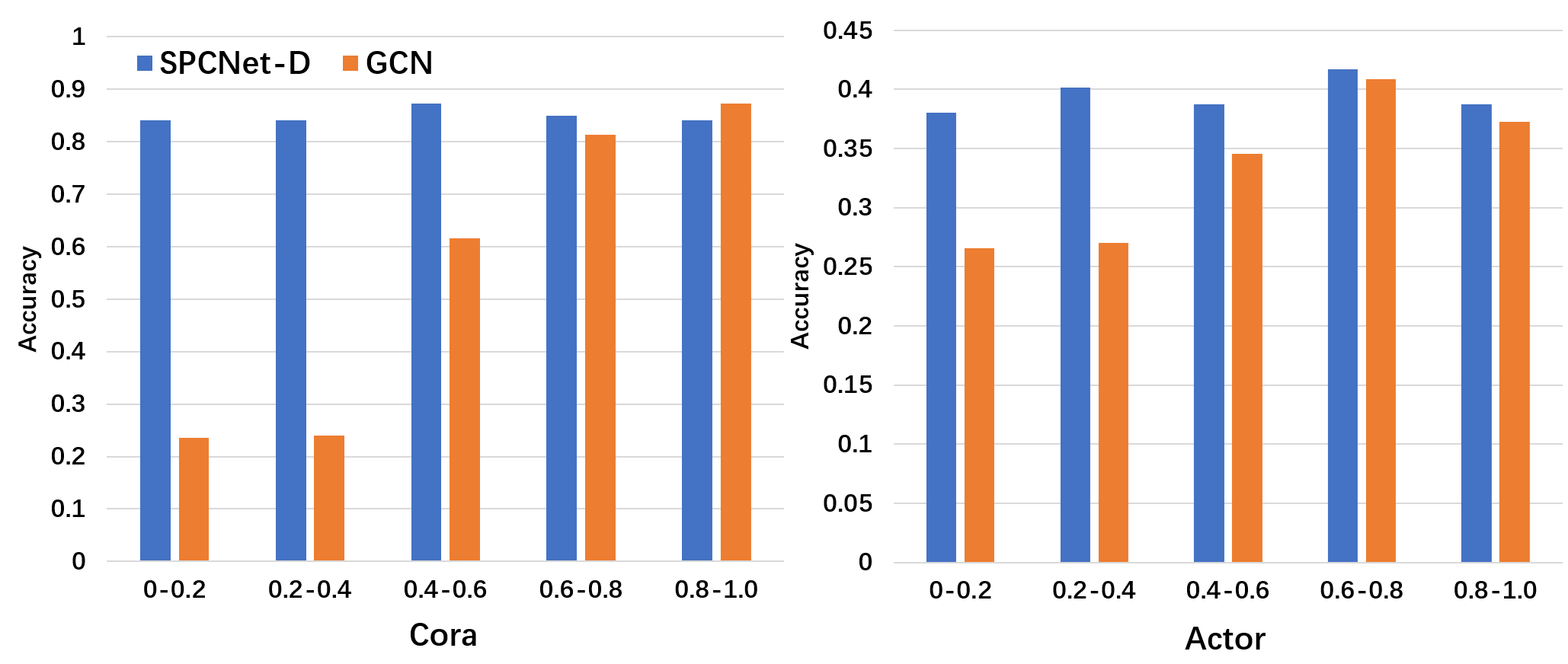}
    \caption{The classification accuracy on nodes with different homophilic degrees. SPCNet-$D$ gives a stable performance, while the performance of GCN varies considerably.}
    \label{figdis2}
\end{figure}

Practical graphs typically exhibit a mixture of homophilic and heterophilic patterns \cite{mao2023demystifying,li2022finding}. As shown in Fig. \ref{figdis2}, the classification accuracy varies dramatically for nodes with different levels of homophily. In our previous work, the Possion-Charlier Network (PCNet) \cite{li2024pc} was developed to extract homophily from heterophilic graphs and vice versa. It derives Possion-Charlier convolution (PC-Conv) based on a twofold filtering mechanism, which performs heterophilic aggregation and homophilic aggregation simultaneously. Although PCNet has shown a pronounced effect, there are still some challenges in further improving the efficacy and efficiency.
(1) \textbf{Lack of Robustness:} Apart from structural disparity, real-world graphs are always disturbed due to measurement errors or other unexpected situations \cite{kenlay2021interpretable}. Therefore, the filter should be designed to be resistant to perturbations in the underlying data, which is ignored by PCNet. Recent research shows that GNNs are vulnerable to adversarial attacks that manipulate graph structures \cite{kenlay2021interpretable}. These attacks drastically degrade GNN performance. Some GNNs can tackle heterophilic graphs but struggle with structural perturbations, while other methods try to enhance the robustness of GNNs but are computationally expensive \cite{entezari2020all, jin2020graph, zhang2020gnnguard, yang2017robust}, prompting further exploration of their robustness \cite{EvenNet, ren2024achieving}. (2) \textbf{Too Many Learnable Parameters:} 
Balancing generalization performance with the abundance of learnable parameters poses a significant challenge in practical applications, particularly in tasks like node classification. Excessive learnable parameters, as seen in PCNet, may lead to overfitting, thereby compromising its ability to generalize effectively. Therefore, simplifying PCNet while maintaining its performance in node classification is a challenging task, offering valuable insights for numerous filter designs.

In light of the above challenges in the PCNet method, we propose a simplified PCNet (SPCNet) to extend the PCNet in a more effective and efficient manner. First, we extend the filter order to continuous values and reduce its parameters, which offers valuable insight for filter designs. Two variants with adaptive neighborhood sizes are implemented. Second, our model is designed with robustness, which addresses the issue of designing filters resistant to structural perturbations and adversarial attacks. A detailed analysis of the robustness is provided. Empirical validation on various datasets representing both homophilic and heterophilic graphs demonstrates the effectiveness of SPCNet in semi-supervised learning tasks. Specifically, several experiments show that SPCNet has clear advantages over PCNet.

\begin{comment}

To sum up, our contributions are summarized as follows:

\begin{itemize}
    \item{We propose a cross-receptive filter with both local and global convolution, which is able to unify both homophily and heterophily. Specially, proposed global filter has a topological representation of the whole topology structure using a single matrix.}
    \item{We propose to use Poisson-Charlier polynomial to approximate the cross-receptive filter, which is accurate and requires linear complexity. Then we discuss the adaptiveness of local neighborhood size to provide effective guidance when designing filters. }
    \item{We verify our model by semi-supervised learning tasks on both homophilic and heterophilic graph datasets. Besides, we further demonstrate our model in a robustness test.}
\end{itemize}
\end{comment}

\section{Related Work}
\subsection{Spectral GNNs}
Many GNNs have been proposed based on the analysis of the spectral domain. Generally, there are two kinds of them. The first kind of spectral GNN is driven by a fixed filter. GCN \cite{GCN} is based on a first-order Chebyshev polynomial and is considered a low-pass filter. APPNP \cite{APPNP} is based on Personalized PageRank and performs low-pass filtering as well. \cite{GNN-LF-HF} and \cite{xie2023contrastive} combine fixed low- and high-pass filters with graph optimization functions to capture meaningful low- and high-frequency information, respectively. However, fixed filters have limited ability to learn holistic information. The other kind of spectral GNN is driven by a learnable filter. ChebyNet \cite{ChebyNet} is based on Chebyshev polynomials and generalizes convolutional neural networks from regular grids to irregular domains represented by graphs. ARMA \cite{ARMA} is based on an auto-regressive moving average filter and has a more flexible frequency response.
BernNet \cite{BernNet} adopts the Bernstein polynomial approximation to estimate and design these filters with the setting of Bernstein basis coefficients. EvenNet \cite{EvenNet} uses an even-polynomial graph filter and improves generalization between homophilic and heterophilic graphs by ignoring odd-hop neighbors. ChebNetII \cite{Chebyshev2} enhances the original Chebyshev polynomial approximation while reducing overfitting. JacobiConv \cite{JacobiConv} utilizes an orthogonal basis to improve performance while relinquishing nonlinearity in favor of flexibility and adaptability to various weight functions corresponding to the density of graph signals in the spectrum. Recently, rather than using polynomials with a fixed order, OptBasisGNN \cite{OptBasisGNN} orthogonalizes the polynomial basis to learn an optimal one and maximizes convergence speed. However, these methods have the following disadvantages: (1) they require many learnable parameters to approximate target filters, which is time-consuming. (2) They ignore spatial information when designing filters. (3) Most of them ignore the robustness of the model and may be sensitive to noise.

%However, its orthonormal basis was constructed without taking into account the homophily property, which makes it less effective than strong homophily graphs.

\subsection{Learning on Heterophilic Graphs}
Most GNN-based methods experience performance degradation when dealing with heterophilic graphs. Various strategies have emerged to address this issue by modifying the multi-hop neighbors. MixHop \cite{Mixhop} addresses the limitations of GNN-based methods in capturing neighborhood mixing relationships and proposes to iteratively blend feature representations of neighbors at different distances. GCNII \cite{GCN2} presents a deep GNN model capable of adeptly capturing and leveraging high-order information within graph-structured data. H$_2$GCN \cite{H2GCN} proposes ego- and neighbor-embedding separation, higher-order neighborhoods, and combination of intermediate representations. WRGAT \cite{WRGAT} builds a computation graph using structural equivalences between nodes, leading to increased assortativity and enhanced prediction performance. GPRGNN \cite{GPRGNN} addresses the challenge of efficiently integrating node characteristics and graph structure. It adaptively learns weights to enhance the extraction of node characteristics and topological information, irrespective of node label similarities or differences. GGCN \cite{GGCN} presents two quantitative metrics and suggests two corresponding strategies, structure-based edge correction and feature-based edge correction, which derive signed edge weights from data characteristics. ACM-GCN \cite{ACM-GCN} adaptively exploits aggregation, diversification, and identity channels within each GNN layer to address harmful heterophily and improve GNN performance. LINKX \cite{LINKX} and GloGNN++ \cite{Global-hete} are the most recent work to address heterphilic problem. LINKX \cite{LINKX} proposes a simple method that embeds the feature and graph separately. GloGNN++ \cite{Global-hete} constructs the graph with high-order information to find homophilic nodes in the topology structure. DGCN \cite{pan2023beyond} extracts a homophilic and heterophilic graph from the original structure and performs low- and high-pass filtering.

The above work focuses on either going deep to find homophilic neighbors or mixing local information to enhance performance. However, none of them adaptively select a local neighborhood size. In addition, they do not have a global view of the topology structure.

\subsection{Global GNN}
N-GCN \cite{N-gcn} trains multiple instances of GCNs over node pairs discovered at different distances in random walks and learns a combination of their outputs to optimize the classification objective. DeepGCN \cite{Deepgcns} adapts concepts from deep Convolutional Neural Networks, such as residual connections, dense connections, and dilated convolutions, to GCN architectures. JKNet \cite{JKNet} adaptively leverages different neighborhood ranges for each node to enable better structure-aware representations. RevGNN-Deep \cite{RevGNN-Deep} uses reversible connections combined with deep network architectures, achieving improved memory and parameter efficiency. However, these methods involve either delving deeper to reduce over-smoothing effects or stacking features to expand the multi-hop neighbors. None of these methods specifically concentrates on combining multiple adjacency matrices to create an adjacency matrix with a global view. \cite{grand, grand++} have been proposed to use the exponential function to perform graph convolution with the largest possible multihop neighbors, but they only focus on homophilic graphs and are based on complex designs to reduce complexity. SPGRL\cite{fang2022structure} achieves global structure information by exchange reconstruction method.

\section{Methodology}
\subsection{Preliminaries}\label{preliminaries}

\textbf{Notations}. 
We define an undirected graph as $\mathcal{G}=(\mathcal{V},E)$, where $\mathcal{V} = \left\{v_1, v_2,..., v_m \right\}$ is the node set and $E$ is the edge set with $|E|$ edges. $A\in\mathbb{R}^{m \times m}$ is the adjacency matrix. $A_{i j}= 1$ iff $(v_i, v_j) \in E$, indicates that $v_i$ and $v_j$ share an edge, or else $A_{i j}= 0$. Matrix $D$ is the degree matrix, where $D_{i i}=\sum_j A_{i j}$. Then, the normalized adjacency matrix and the Laplacian matrix are $\tilde{A}=(D+I)^{-\frac{1}{2}} (A+I) (D+I)^{-\frac{1}{2}}$ and $\tilde{L}=I-\tilde{A}$, respectively. $\tilde{L}=U \Lambda U^{\top}$ and the eigenvalue matrix $\Lambda=diag(\tilde{\lambda}_{1},\dots,\tilde{\lambda}_{m})$ indicates the frequency of the graph signal, $U=\{u_{1},\dots,u_{m}\}$ indicates $ \Lambda $'s corresponding frequency component. The filter function of the graph is defined as $h(\tilde{L})$. Furthermore, a perturbed graph with edge omissions or insertions is defined as $\mathcal{G'}$ and its normalized Laplacian matrix is $L_p$. $X\in\mathbb{R}^{m \times d}$ is the node representation matrix where $X_{i:}$ and $X_{:j}$ correspond to the characteristic vector of node $i$ and the $j$-th node attribute, respectively, and $d$ is the dimensionality of the attribute.

\subsection{Cross-receptive Filter}
In this section, we will first introduce our cross-receptive filter and explain why such a filter performs well on different degrees of heterophilic graph from both a local and global point of view. PCNet \cite{li2024pc} gives an accurate approximation to it with Poisson-Charlier polynomials. Then we discuss how to simplify PCNet and achieve an adaptive local neighborhood size in a flexible manner. After that, we conduct an analysis of robustness.% and complexity in contrast with the popular polynomial filtering model.

%Filtering is used to aggregate information from neighboring nodes. To be specific, low-pass filtering leads the representations to become similar, while high-pass filtering enlarges their differences \cite{bo2021beyond}. In the real world, the structure of the graph is complex, where homophilic and heterophilic nodes/edges coexist \cite{mao2023demystifying}. Consequently, a single filter cannot produce the best results in all cases. Therefore, we propose adopting a cross-receptive filter to gather relevant local and global information to unify homophily and heterophily.

\begin{comment}
\begin{flushleft}
\textbf{heterophilic aggregation: }
\end{flushleft}
\begin{equation}
    M^*  = \mathop{\arg\min}\limits_{M} Tr(M^{\top}g_{1}(L)M)+ \alpha_{1} \|M-X\|_{F}^{2} 
    \label{hete}
\end{equation}

\begin{flushleft}
\textbf{homophilic aggregation: }
\end{flushleft}
\begin{equation}
    Z^*  = \mathop{\arg\min}\limits_{Z}
Tr(Z^{\top}g_{2}(L)Z)+\alpha_{2} \|Z-M^*\|_{F}^{2}
\label{homo}
\end{equation}
where the energy functions for homophilic and heterophilic aggregation are defined by $g_1(L)$ and $g_2(L)$, respectively.  
Optimal solution of Eqs. (\ref{hete}-\ref{homo}) is:
\begin{equation}
         Z^*=[(\alpha_{1} I+ g_{1}(L))(\alpha_{2} I+ g_{2}(L))]^{-1} X   
\end{equation}
\end{comment}

%\textbf{Local Aggregation:}
Traditional GNNs have proposed using local-level aggregation over each node, which can smooth graph signals and perform well on homophilic graphs. To be specific, GCN performs aggregation with filter $h(\tilde{L}) = I -  \tilde{L}$ to support learning within local neighborhoods, with a focus on one-hop neighborhoods. To concentrate on local information with more hops, we use the following filter to perform local-level aggregation:
\begin{equation}
         h(\tilde{L}) = (I -  \tilde{L})^k X,
\end{equation}
where $k$ is the filter order. Determining an appropriate neighborhood size for different graphs is difficult. It would be more practical to make $k$ adaptive \cite{zhang2021node}. We will discuss flexible $k$ in the next subsection.

Much work has been proposed to increase the size of the neighborhood to tackle heterophily. Thus, we consider global aggregation to enlarge the receptive field as much as possible. Define a complementary graph $\Bar{\mathcal{G}}$ with adjacency matrix $\Bar{A}=\varphi I-A$, where $\varphi$ is a trade-off parameter to evaluate the self-loop information. Its corresponding Laplacian matrix is $\tilde{L} = (\varphi-2)I+ L $, and its global-level aggregation is $h(\tilde{L})=e^{t \tilde{L}}X$. We can explain the information diffusion of the heterophilic graph by the Taylor expansion.  
\begin{align*}
&h(\tilde{L})=\sum_{r=0}^{\infty} \frac{(t(\varphi-1)I-A)^{2r+1}}{(2r+1)!}+\frac{(A-t(\varphi-1)I)^{2r}}{(2r)!}.
\end{align*}
It is evident that the global aggregation gathers the neighborhood data for even-order neighbors while pushing the odd-order neighbors away, which is compatible with the heterophilic graph's structural characteristic. %In contrast to previous works that only utilize even-order neighbors \cite{EvenNet}, our distinct approach to odd-order and even-order neighbors enables us to uncover a more comprehensive set of information. Moreover, based on previous analysis, it can leverage infinite-order information, which decreases as the order increases. This characteristic offers greater flexibility compared to traditional GNNs that adhere to a fixed neighborhood size.

\begin{comment}
\subsection{Monomial Filter}\label{SPCNet}
In the context of heterophilic aggregation, $h_{1}(\tilde{L})$ is set to $e^{t\tilde{L}}$, and its corresponding energy function is $g_{1}(\tilde{L})=e^{-t\tilde{L}}-\alpha_1 I$. In the case of homophilic aggregation, a low-pass filter is employed to capture local low-order information and we let $g_{2}(\tilde{L}) = \tilde{L}$ and set $\alpha_{2} = 1$ for the sake of simplicity, which results in $h_{2}(\tilde{L}) = I -  \tilde{L}$.
\end{comment}

To combine both local and global neighbors, we define our cross-receptive filter $h_{k,t}(\tilde{L})$ as:
\begin{equation}
    Z=h_{k,t}(\tilde{L})X=(I-\tilde{L})^{k}e^{t\tilde{L} }X=U \cdot (I-\Lambda)^{k}e^{t\Lambda} \cdot U^{\top}X,
\end{equation}
where we directly multiply the low-pass and high-pass filters, similar operation in BernNet \cite{BernNet} and JacobiConv \cite{JacobiConv}. With local and global information, our proposed filter can deal with homophilic and heterophilic nodes/edges in the graph.

\subsection{Possion-Charlier Polynomial Approximation}
Using cross-receptive filter directly requires cubic complexity, which is computationally expensive. Following PCNet, we use Possion-Charlier polynomial to approximate it, which is defined as follows \cite{kroeker1977wiener}:
\begin{equation}\label{equiveq}
\begin{aligned}
  (1-\tilde{\lambda})^{\gamma} e^{t \tilde{\lambda}}=\sum_{n=0}^{\infty} \frac{(-\tilde{\lambda})^{n}}{n !} C_{n}(\gamma, t) \quad  \gamma, t \in R,\hspace{.1cm} t>0
  % \\&\gamma, t \in R,\hspace{.1cm} t>0,
\end{aligned}
\end{equation}
where $C_{n}(\gamma, t)=\sum_{k=0}^{n} C_{n}^{k}(-t)^{k} \gamma \cdots(\gamma-n+k+1)$ is the Poisson-Charlier polynomial and has the following recurrence relations: 
\begin{equation}
\begin{aligned}
    &C_{0}(\gamma, t)=1, C_{1}(\gamma, t)=\gamma- t, ...,  C_{n}(\gamma, t)=(\gamma-n-t+1)\\& C_{n-1}(\gamma, t)- (n-1) t C_{n-2}(\gamma, t), n \geq 2 
\end{aligned}
\label{recursion}
\end{equation}

In addition, topology-structure-based aggregation might not be able to learn an effective representation for graphs with a significant degree of heterophily. Inspired by \cite{chen2020simple}, we employ identity mapping to include raw node attributes and guarantee that the representation can at least achieve the same performance as the original data. Finally, our filter is formulated as:
\begin{equation}
\begin{aligned} 
Z &=h_{t}(\tilde{L})X=U(I + h_{k,t}(\tilde{\lambda}))U^{\top} X\\
   % &=(\theta_{0}\cdot I+\sum_{k=1}^{K} \theta_{k} \cdot g_{k,t}(\tilde{L}))X
    &= X + \sum_{n=0}^{N} C_{n}\left(k, t\right) \frac{(-\tilde{L})^{n}}{n !}X,
\end{aligned}
\label{pcconv}
\end{equation}
where $C_{n}\left(k, t\right)$ is calculated using Eq. (\ref{recursion}). Note that PCNet \cite{li2024pc} uses an abundance of learnable parameters in its filter, i.e., $Z^*=\beta_{0} \cdot X +\sum_{k=1}^{K} \beta_{k} \cdot \sum_{n=0}^{N} C_{n}\left(k, t\right) \frac{(-\tilde{L})^{n}}{n !}X$, where $\beta_{0}, \beta_{1},..., \beta_{K}$ are learnable parameters and $K$ is always set to 10. We simplify it with only one term, which is obviously more efficient. To maintain its performance, we propose to select $k$ in a flexible manner.

The preference for $k$ is closely related to the input data, namely the distinct signals present on the underlying graphs. Thus, it would be more practical if $k$ is adaptive. To make SPCNet suitable for different situations, we consider $k$ as a continuous value. Unlike existing work where $k$ is treated as an integer, in our approach, it represents the average value across different filter orders. We propose two ways to obtain it. First, we consider $k$ as a hyperparameter and mark our model as SPCNet-$D$. The second is that we initialize $k$ as 1 and make it learnable, which is marked as SPCNet-$L$.

We introduce the SPCNet-$D$ and SPCNet-$L$ architecture for the node classification task, building upon the Poisson-Charlier polynomial. They consist of two key components: filtering with SPCNet-$D$ or SPCNet-$L$, and a classification component.  
%The problem of overcorrelation in spectral GNNs is analyzed in \cite{Corr2}. It is highlighted in \cite{Corr} that the fixed range of standard Laplacian normalization, $[-1,1]$, is problematic due to the issue of diminishing returns. The problem can be mitigated through generalized normalization, which involves shrinking the spectrum \cite{Large-pre}. Consequently, we incorporate the hyperparameter $\eta$ into our framework.
To be specific:
\begin{equation}
\begin{aligned}
&Z=softmax\left((I+h_{k,t}(\tilde{\lambda})) \Theta(X)\right),\\
&\mathcal{L}=-\sum_{l \in \mathcal{Y}_{L}} \sum_{c=1}^{C} Y_{l c} \ln Z_{l c},
\end{aligned}
\end{equation}
where the cross-entropy loss over label matrix $Y$ is used to train the GCN network and $\mathcal{Y}_{L}$ is the labeled node indices.

% \begin{figure*}[t]
%     \centering
%     \includegraphics[width=1.\linewidth]{}
%     \caption{An illustration of proposed filter.}
%     \label{fig2}
% \end{figure*}

\begin{comment}
\begin{table}[t]
\centering
\caption{The computational complexity of existing polynomial-based methods. }\label{tab::Computational Complexity}
\begin{center}
%\setlength{\baselineskip}{10.0\baselineskip}
\resizebox{0.4\textwidth}{!}{
\begin{tabular}{lc}
\toprule
Model      & Computational Complexity 
\\
\midrule
MLP      &  $ O(m F d + m d C) $  \\
GPRGNN &  $ O (K |E| C) + O(m K C)$ \\
BernNet     &  $O(K^{2} |E| C) + O(m K C) + O(K)$\\
JacobiConv  &   $O(K |E| C) + O(m K C) + O(K)$\\
ChebNetII   &  $O(K |E| C) + O(m K C) + O(K)$  \\
PCNet       &   $O(N |E| C) + O(m N K C) + O(K)$   \\
SPCNet       &   $O(N |E| C) + O(m N C) + O(K)$   \\
\bottomrule
\end{tabular}}
\end{center}
\vskip -0.15in
\end{table}
\subsection{Robustness and Complexity analysis Compared with Existing Polynomial Approximation}

\textbf{Complexity analysis}

The computational complexity of the relevant methods is presented in Table \ref{tab::Computational Complexity}. Let $N$ be the message-passing iterations, denoted as $\tilde{L}^{i}\Theta(X)$ for $i \in {0,\ldots,N}$, where $\Theta(X) \in \mathbb{R}^{m \times C}$ represents the transformed feature matrix for $C$ classes, typically utilizing an MLP or linear layer. Compared to PCNet, our method is obviously more efficient. Compared to other methods, which generally set $K$ to 10, SPCNet is more efficient since $N$ is often smaller than $K$.
\end{comment}

\subsection{Robustness analysis}
%\textbf{Robust analysis}

The graph shift operator (GSO) is a linear operator that describes the movement of signals on a graph. The stability of graph filters is primarily investigated by characterizing the magnitude of perturbation resulting from the variations of a GSO under the operator norm. Same as \cite{kenlay2021interpretable}, we define the concept of stability based on relative output distance, which is:
\begin{equation}
\frac{\left\|h(\boldsymbol{\Delta}) \mathbf{x}-h\left(\boldsymbol{\Delta}_p\right) \mathbf{x}\right\|_2}{\|\mathbf{x}\|_2},
\end{equation}
where $h$ is a spectral graph filter, $\mathbf{x}$ is an input graph signal and $\boldsymbol{\Delta}$ is the GSO of the graph $\mathcal{G}$ (similarly $\boldsymbol{\Delta}_p$ is the GSO of $\mathcal{G}^{\prime}$). Without loss of generality, we assume that $\mathbf{x}$ has a unit norm. Then we can bound the above equation to measure the largest possible change made by GSO:

\begin{equation}
\begin{aligned} 
\frac{\left\|h(\boldsymbol{\Delta}) \mathbf{x}-h\left(\boldsymbol{\Delta}_p\right) \mathbf{x}\right\|_2}{\|\mathbf{x}\|_2}
&\leq \max _{\mathbf{x} \neq 0} \frac{\left\|h(\boldsymbol{\Delta}) \mathbf{x}-h\left(\boldsymbol{\Delta}_p\right) \mathbf{x}\right\|_2}{\|\mathbf{x}\|_2} \\
& \stackrel{\text { def }}{=}\left\|h(\boldsymbol{\Delta})-h\left(\boldsymbol{\Delta}_p\right)\right\|_2 .
\end{aligned}
\end{equation}

The error between the matrix is $\left\|L_p-L\right\|_2$. A filter is considered to be linearly stable when it satisfies the following condition.

\begin{definition}
    A spectral graph filter $h(L)$ is linearly stable if its change is limited in constant fast with respect to a type of GSO, i.e.
    \begin{equation}
\left\|h(\boldsymbol{\Delta})-h\left(\boldsymbol{\Delta}_p\right)\right\|_2 \leq \mathbf{C}\|L-L_p\|_2
\end{equation}
    %if any $\boldsymbol{\Delta}$ and $\boldsymbol{\Delta}_p$
\end{definition}

%\begin{proposition}
%Polynomial filters, represented as $h(\lambda) = \sum_{k=0}^K \theta_k \lambda^k$, exhibit linear stability when applied to any GSO with a spectrum in the range of $[-1,1]$. Besides, our filter is more stable than other existing Polynomial filters.
%\end{proposition}

\begin{theorem}
Our method exhibits linear stability when applied to any GSO with a spectrum in the range of $[-1,1]$. Besides, it is more stable than existing polynomial filters.
\end{theorem}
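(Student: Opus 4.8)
The plan is to exploit the fact that, after the Poisson-Charlier truncation in Eq. (\ref{pcconv}), the filter is a genuine polynomial in the GSO, so the whole problem reduces to bounding a difference of matrix powers. Writing $\boldsymbol{\Delta}$ for the GSO with spectrum in $[-1,1]$ (hence $\|\boldsymbol{\Delta}\|_2 \le 1$), I would set $h(\boldsymbol{\Delta}) = I + \sum_{n=0}^{N} c_n \boldsymbol{\Delta}^n$ with $c_n = \frac{(-1)^n}{n!} C_n(k,t)$ read off from Eq. (\ref{pcconv}), so that the leading identity term cancels in the difference $h(\boldsymbol{\Delta}) - h(\boldsymbol{\Delta}_p) = \sum_{n=1}^{N} c_n(\boldsymbol{\Delta}^n - \boldsymbol{\Delta}_p^n)$. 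The reason this is the right first move is that each power difference admits the telescoping identity $\boldsymbol{\Delta}^n - \boldsymbol{\Delta}_p^n = \sum_{j=0}^{n-1} \boldsymbol{\Delta}^j (\boldsymbol{\Delta}-\boldsymbol{\Delta}_p)\boldsymbol{\Delta}_p^{n-1-j}$, which isolates a single factor of $\boldsymbol{\Delta}-\boldsymbol{\Delta}_p$ in every summand and requires no commutativity between $\boldsymbol{\Delta}$ and $\boldsymbol{\Delta}_p$.

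Next I would take operator norms and apply submultiplicativity of the $2$-norm together with the spectral hypothesis $\|\boldsymbol{\Delta}\|_2, \|\boldsymbol{\Delta}_p\|_2 \le 1$. Each of the $n$ telescoping terms is then bounded by $\|\boldsymbol{\Delta}-\boldsymbol{\Delta}_p\|_2$, giving $\|\boldsymbol{\Delta}^n - \boldsymbol{\Delta}_p^n\|_2 \le n\|\boldsymbol{\Delta}-\boldsymbol{\Delta}_p\|_2$. Summing over $n$ and noting that $\|\boldsymbol{\Delta}-\boldsymbol{\Delta}_p\|_2 = \|L - L_p\|_2$ (since a Laplacian and its adjacency-type GSO differ only by the fixed shift $I$, which cancels in the difference) yields
\begin{equation*}
\left\|h(\boldsymbol{\Delta})-h(\boldsymbol{\Delta}_p)\right\|_2 \le \left(\sum_{n=1}^{N} n|c_n|\right)\|L-L_p\|_2 = \underbrace{\left(\sum_{n=1}^{N}\frac{|C_n(k,t)|}{(n-1)!}\right)}_{\mathbf{C}}\,\|L-L_p\|_2 .
\end{equation*}
This is precisely the linear-stability inequality of the preceding definition, with a finite constant $\mathbf{C}$ because it is a finite sum, which settles the first assertion.

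For the second assertion I would compare this $\mathbf{C}$ with the analogous constant $\sum_{n=1}^{K} n|\theta_n|$ obtained for a generic degree-$K$ learnable polynomial filter (BernNet, ChebNetII, GPRGNN, and the like). The leverage comes from two sources: the factorial damping $1/(n-1)!$ inherited from the Poisson-Charlier expansion, and the much smaller effective order $N$ used by SPCNet, whereas the competing methods fix $K \approx 10$. Concretely, I would argue that $|C_n(k,t)|/(n-1)!$ decays rapidly because the underlying generating function $(1-\tilde{\lambda})^{k} e^{t\tilde{\lambda}}$ is analytic and smooth on $[-1,1]$, so its high-order coefficient tail is negligible and $\mathbf{C}$ can be controlled by $\sup_{\tilde{\lambda}\in[-1,1]}|h'(\tilde{\lambda})|$; by contrast, an unconstrained high-degree filter can realize large high-order coefficients and hence an oscillatory derivative with a large Lipschitz constant.

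The hard part will be making this second comparison rigorous and fair rather than merely heuristic, since the competing constants depend on learned or tuned coefficients that are not fixed a priori. A clean inequality therefore requires either pinning down a common target frequency response and comparing the resulting coefficient $\ell^1$-type norms, or uniformly bounding $\mathbf{C}$ via the smoothness of $(1-\tilde{\lambda})^{k}e^{t\tilde{\lambda}}$ and contrasting it with a worst-case derivative bound for degree-$K$ polynomials. Accordingly, I would present the linear-stability half as a fully rigorous bound and support the ``more stable'' half by keeping the factorial decay of $|C_n(k,t)|/(n-1)!$ as the quantitative driver of a smaller $\mathbf{C}$.
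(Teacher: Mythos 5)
Your proposal is correct and follows essentially the same route as the paper: a triangle inequality over the truncated Poisson--Charlier expansion combined with the bound $\|\boldsymbol{\Delta}^n-\boldsymbol{\Delta}_p^n\|_2\le n\|\boldsymbol{\Delta}-\boldsymbol{\Delta}_p\|_2$, arriving at the identical stability constant $\mathbf{C}=\sum_{n=1}^N |C_n(k,t)|/(n-1)!$. The only differences are cosmetic: you prove the power-difference bound inline via the telescoping identity where the paper cites it from Levie et al., and you explicitly flag that the ``more stable than existing polynomial filters'' half is only heuristic --- which matches the paper, whose own justification of that half is likewise informal (it merely contrasts the data-dependent uncertainty of the learned $\theta_i$ with the fixed $C_n(k,t)$).
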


\begin{proof}
Obviously, $\|L\|_2 \leq 1$ and $\left\|L_p\right\|_2 \leq 1$. \cite{levie2019transferability} has proved that $\left\|L_p^k-L^k\right\| \leq k\|L_p-L\|_2$. Combining this with the triangle inequality results in:

$\begin{aligned} 
\left\|h\left(L_p\right)-h(L)\right\|_2
& =\left\|\sum_{n=0}^N C_n \frac{\left(-L_p)^n\right.}{n !}-\sum_{n=0}^N C_n \frac{(-L)^n}{n !}\right\|_2 \\
& \leq \sum_{n=1}^N\left|\frac{(-1)^n C_n}{n !}\right| \cdot\left\|L_p^n-L^n\right\|_2 \\
& \leq \sum_{n=1}^N\left|\frac{C_n}{(n-1) !}\right| \cdot\left\|L_p-L\right\|_2
\end{aligned}$

And the stability constant $\mathbf{C}= \sum_{n=1}^N\left|\frac{C_n}{(n-1) !}\right|$. Thus, our method is robust to graph structure noise. Similarly, a polynomial filter can produce $\mathbf{C}= \sum_{i=0}^N i\left|\theta_i\right|$ \cite{kenlay2021interpretable}. Multiple $\theta_i$ can be largely affected by data and neural networks, whose uncertainty is very high. Thus, our model with certain $\mathbf{C}$ has clear advantages over it. In addition, most polynomial-based methods use the sum of filter orders, making them more sensitive to noise.
\end{proof}

\section{Experiments on Synthetic Graphs}
%\subsection{Synthetic graphs}
\begin{figure*}[!htb]
    \centering
    \includegraphics[width=1.\linewidth]{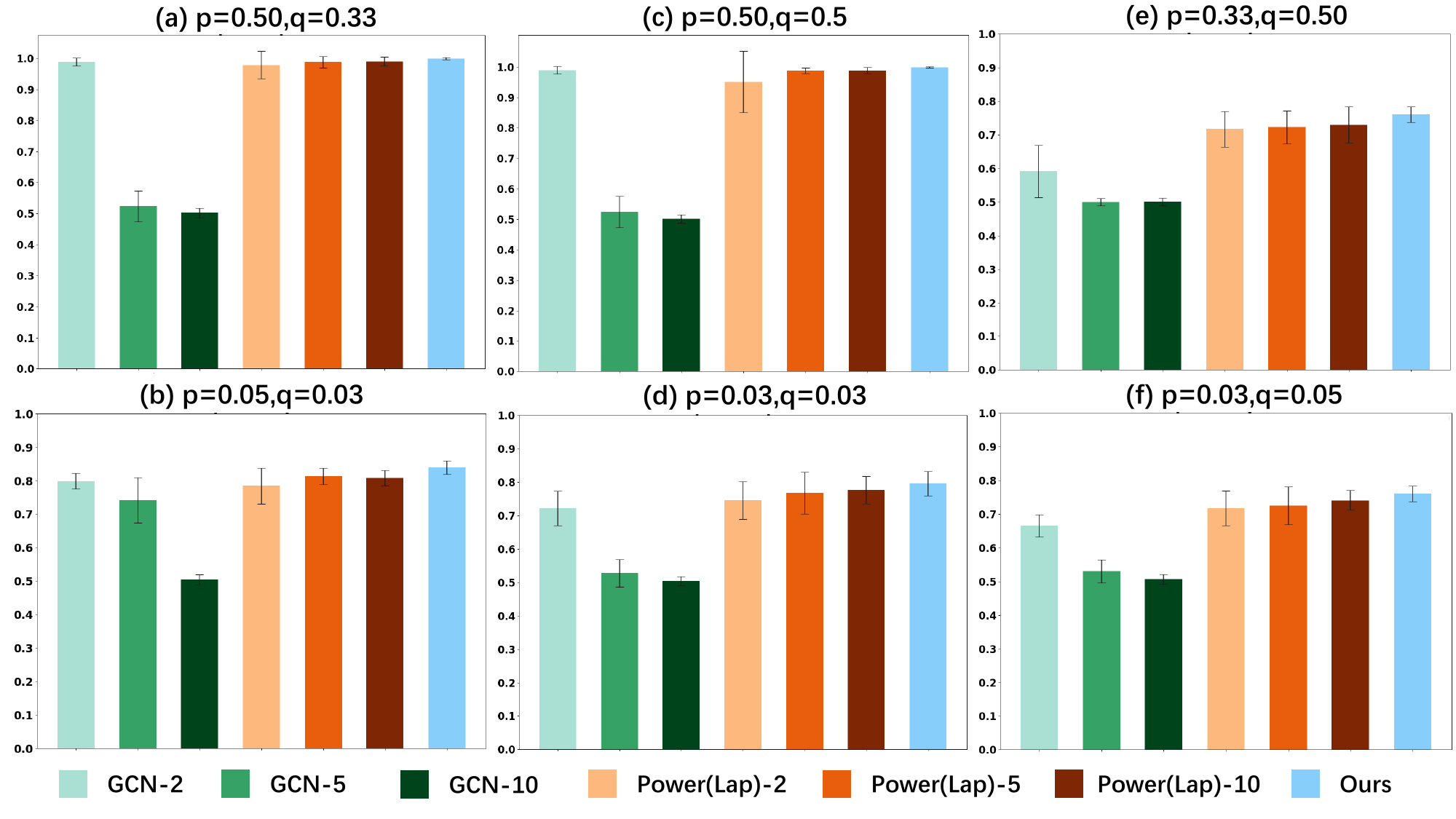}
    \caption{Classification accuracy on synthetic graphs.}
    \label{fig4}
\end{figure*}

\begin{comment}
To verify that our proposed method can successfully tackle heterophily and have global receptive field, we test SPCNet-$D$ on synthetic graphs. For a fair comparison, the experimental settings follow  \cite{GloSpectral}, which proposes to construct a two-block symmetric Stochastic Block Model (2B-SBM) graph $A'$ with $w$ nodes as follows:
$$
A' \sim \operatorname{Bernoulli}(Q), Q=P B P^{\top},
$$

$$
P_{i, \cdot}=\left\{\begin{array}{ll}
{[1,0]} & \text { if } i \in[w / 2] \\
{[0,1]} & \text { otherwise. }
\end{array}, B=\left[\begin{array}{ll}
\text{p} & \text{q} \\
\text{q} & \text{p}
\end{array}\right],\right.
$$
where $P \in \mathbb{R}^{w \times F}$ indicates the membership matrix and $P_{w, f}$ = 1 when the $i$-th node is from the $f$-th class, $\left\|P_{i, \cdot}\right\|_1=\sum_{f=1}^F\left|P_{i, f}\right|=1$. $B\in[0,1]^{F \times F}$ indicates the full-rank matrix which represents the block connection probability, p,q $\in(0,1)$ and p$\neq$q. p$>$q indicates homophilic graph and p$<$q indicates heterophilic graph. Within block $f\in{0,1}$, the node features are drawn from an $r$-dimensional multivariate Gaussian distribution $\mathcal{N}\left(\mu_f, \Sigma_f\right)$ and then placed into a node attribute matrix $X' \in \mathbb{R}^{w \times r}$. SBM model is a popular random graph model with seeded clusters and has been extensively researched in relation to community discovery \cite{abbe2017community}.
\end{comment}
To verify that our proposed method can successfully tackle heterophily and capture global information, we first test SPCNet-$D$ on synthetic graphs. For a fair comparison, the experimental settings follow \cite{GloSpectral}, which proposes to construct a two-block symmetric Stochastic Block Model (2B-SBM) graph $A'$. p, q $\in(0,1)$ represent the block connection probability and p$\neq$q. p$>$q indicates homophilic graph and p$<$q denotes heterophilic graph. %SBM model is a popular random graph model with seeded clusters and has been extensively researched in relation to community discovery \cite{abbe2017community}.

We perform binary node classification on $A'$. It has 500 nodes, and the split is 10/90 train/test. The features $X'$ are sampled from a mixture of two Gaussians with an average $\mu_0$ =$[1, 1]$, $\mu_0=-\mu_1$. The baselines are from the most recent global spectral-inspired GNN \cite{GloSpectral}, including GCN and Power(Lap). Power(Lap) uses the set of representations that includes global spectral information and local neighbor-averaged characteristics with an inception network. We also implement them with different layers, e.g., GCN-2 indicates GCN with 2 layers.

\textbf{Results.} Fig. \ref{fig4} shows the averaged classification accuracy with error bars under different settings of p and q. It can be seen that SPCNet-$D$ achieves a dominant performance in all cases. Specifically, SPCNet-$D$'s high accuracy on dense graphs (Fig. \ref{fig4} (a),(c) and (e)) and sparse graphs (Fig. \ref{fig4} (b), (d) and (f)) is due to its ability to leverage both local and global structural information. Similarly, it can be seen that SPCNet-$D$ can handle heterophlilic graphs (Fig. \ref{fig4} (e) and (f)). GCN-5 and GCN-10 perform very poorly in most cases, which verifies the aforementioned claim that traditional GNN does not go deep. In addition, SPCNet-$D$ and Power(Lap) perform a lot better than GCN on sparse graphs, which is due to the expansion of the multi-hop neighbors. SPCNet-$D$ surpasses Power(Lap) in most cases, which is attributed to our adaptive $k$. %This result also shows that the issue of deep GNNs is not completely resolved by simply increasing hop orders.
\begin{table}[h]\tiny%[!ht]
\centering
\caption{Dataset statistics.}\label{tab::datasets}
\begin{center}
\resizebox{.48\textwidth}{!}{
\begin{tabular}{l  l l l l l}
\toprule
Datasets &Nodes  & Edges  & Features  & Classes  & $H(\mathcal{G})$ \\
\midrule
Cora  & 2708 & 5278 & 1433 & 7 & 0.81 \\
Citeseer & 3327 & 4676 & 3703 & 6 & 0.74 \\
Pubmed  & 19,717 & 44,327 & 500 & 3 & 0.80 \\
Actor & 7600 & 26,752 & 931 & 5 & 0.22 \\
Texas  & 183 & 295 & 1703 & 5 & 0.11 \\
Cornell & 183 & 280 & 1703 & 5 & 0.30 \\
Acm  & 3025 & 13,128 & 1870 & 3 & 0.82 \\
Penn94 & 41,554 & 1,362,229 & 5 & 2 & 0.47\\
\bottomrule
\end{tabular}}
\end{center}
%\vskip -0.1in
\end{table}
\begin{table*}[t]

\centering
%\small
\caption{The results of semi-supervised node classification: Mean accuracy (\%) ± 95\% confidence interval. We mark the best performance in \textbf{bold} and the runner-up performance in \textcolor{blue}{blue}.   }
\label{semi}
\resizebox{1.0\textwidth}{!}{
\begin{tabular}{l c c c c c c c c c c c}
%\hline
\toprule
Dataset & MLP & GCN & ChebNet & ARMA& APPNP & GPRGNN & BernNet & ChebNetII & SPCNet-D & SPCNet-L\\ 
\midrule
Cora  

& 57.17$_{\pm\text{1.34}}$ &79.19$_{\pm\text{1.37}}$  &78.08$_{\pm\text{0.86}}$ &79.14$_{\pm\text{1.07}}$ &82.39$_{\pm\text{0.68}}$ &82.37$_{\pm\text{0.91}}$ &82.17$_{\pm\text{0.86}}$
% &82.42$_{\pm\text{0.64}}$
&{\textcolor{blue}{82.42$_{\pm\text{0.64}}$}}
&80.40$_{\pm\text{0.61}}$
&\textbf{82.64$_{\pm\text{0.20}}$}
\\

CiteSeer 
&56.75$_{\pm\text{1.55}}$ &69.71$_{\pm\text{1.32}}$  &67.87$_{\pm\text{1.49}}$ &69.35$_{\pm\text{1.44}}$ &69.79$_{\pm\text{0.92}}$ &69.22$_{\pm\text{1.27}}$ &69.44$_{\pm\text{0.97}}$
% &69.89$_{\pm\text{1.21}}$
&69.89$_{\pm\text{1.21}}$
&{\textcolor{blue}{71.53$_{\pm\text{0.29}}$}}
&{\textbf{72.27$_{\pm\text{0.24}}$}}
\\
PubMed   
&70.52$_{\pm\text{0.27}}$ &78.81$_{\pm\text{0.24}}$ &73.96$_{\pm\text{0.31}}$ &78.31$_{\pm\text{0.22}}$ 
% &79.97$_{\pm\text{0.28}}$
&79.97$_{\pm\text{0.28}}$
&79.28$_{\pm\text{0.33}}$  &79.48$_{\pm\text{0.41}}$ &79.51$_{\pm\text{1.03}}$
&{\textcolor{blue}{81.32$_{\pm\text{0.83}}$}}
&{\textbf{81.77$_{\pm\text{0.75}}$}}
\\
Texas
&32.42$_{\pm\text{9.91}}$ &34.68$_{\pm\text{9.07}}$   &36.35$_{\pm\text{8.90}}$   &39.65$_{\pm\text{8.09}}$  &34.79$_{\pm\text{10.11}}$ &33.98$_{\pm\text{11.90}}$ &43.01$_{\pm\text{7.45}}$ 
% &46.58$_{\pm\text{7.68}}$ 
&46.58$_{\pm\text{7.68}}$
&{\textcolor{blue}{66.32$_{\pm\text{7.08}}$}}
&{\textbf{67.61$_{\pm\text{7.42}}$}}
\\

Actor
 &29.75$_{\pm\text{0.95}}$  &22.74$_{\pm\text{2.37}}$  &26.58$_{\pm\text{1.92}}$ &27.02$_{\pm\text{2.31}}$  &29.74$_{\pm\text{1.04}}$  &28.58$_{\pm\text{1.01}}$ 
&29.87$_{\pm\text{0.78}}$  
% &30.18$_{\pm\text{0.81}}$
 &30.18$_{\pm\text{0.81}}$
 &{\textcolor{blue}{35.35$_{\pm\text{0.33}}$}}
  &{\textbf{35.36$_{\pm\text{0.14}}$}}
\\

% &\textbf{64.97}$_{\pm\textbf{2.37}}$\\

Cornell
&36.53$_{\pm\text{7.92}}$ &32.36$_{\pm\text{8.55}}$ &28.78$_{\pm\text{4.85}}$  &28.90$_{\pm\text{10.07}}$  &34.85$_{\pm\text{9.71}}$  &38.95$_{\pm\text{12.36}}$ &39.42$_{\pm\text{9.59}}$
&42.19$_{\pm\text{11.61}}$
% &42.19$_{\pm\text{11.61}}$
&{\textcolor{blue}{53.97$_{\pm\text{3.75}}$}}
&{\textbf{54.55$_{\pm\text{2.66}}$}}
\\
% &\textbf{56.07}$_{\pm\textbf{0.29}}$\\

\bottomrule
\end{tabular}}
\end{table*}
\section{Experiments on Real Graphs}
%We evaluate the performance of PCNet against SOTA methods on semi-supervised and fully supervised node classification tasks on real-world datasets. 

\begin{table*}[t]
\centering
%\small
\caption{Node classification accuracy of polynomial-based methods. }
\label{acc_res}
\resizebox{0.9\textwidth}{!}{
\begin{tabular}{@{}llll lll@{}}
\toprule
  Method & Cora & Citeseer & Pubmed &Texas  &Actor &Cornell \\ \midrule
MLP 
&76.89$_{\pm\text{0.97}}$
&76.52$_{\pm\text{0.89}}$
&86.14$_{\pm\text{0.25}}$
&86.81$_{\pm\text{2.24}}$

&40.18$_{\pm\text{0.55}}$

 &84.15$_{\pm\text{3.05}}$

\\

GCN        
&87.18$_{\pm\text{1.12}}$      &79.85$_{\pm\text{0.78}}$          &86.79$_{\pm\text{0.31}}$
&76.97$_{\pm\text{3.97}}$ 
&33.26$_{\pm\text{1.15}}$
&65.78$_{\pm\text{4.16}}$

\\

ChebNet    
&87.32$_{\pm\text{0.92}}$      &79.33$_{\pm\text{0.57}}$          &87.82$_{\pm\text{0.24}}$
&86.28$_{\pm\text{2.62}}$

&37.42$_{\pm\text{0.58}}$
&83.91$_{\pm\text{2.17}}$

 \\

ARMA      
&87.13$_{\pm\text{0.80}}$      &80.04$_{\pm\text{0.55}}$          &86.93$_{\pm\text{0.24}}$
&83.97$_{\pm\text{3.77}}$
 
&37.67$_{\pm\text{0.54}}$
&85.62$_{\pm\text{2.13}}$

 \\

APPNP      
&88.16$_{\pm\text{0.74}}$      &80.47$_{\pm\text{0.73}}$
&88.13$_{\pm\text{0.33}}$
&90.64$_{\pm\text{1.70}}$ 
&39.76$_{\pm\text{0.49}}$
&91.52$_{\pm\text{1.81}}$

\\

GCNII        
&88.46$_{\pm\text{0.82}}$      &79.97$_{\pm\text{0.65}}$   
&89.94$_{\pm\text{0.31}}$
&80.46$_{\pm\text{5.91}}$
&36.89$_{\pm\text{0.95}}$
&84.26$_{\pm\text{2.13}}$

\\

TWIRLS 
&88.57$_{\pm\text{0.91}}$
&80.07$_{\pm\text{0.94}}$          &88.87$_{\pm\text{0.43}}$  
&91.31$_{\pm\text{3.36}}$
&38.13$_{\pm\text{0.81}}$
&89.83$_{\pm\text{2.29}}$

 \\

EGNN
&87.47$_{\pm\text{1.33}}$
&80.51$_{\pm\text{0.93}}$         &88.74$_{\pm\text{0.46}}$
&81.34$_{\pm\text{1.56}}$
&35.16$_{\pm\text{0.64}}$
&82.09$_{\pm\text{1.16}}$

\\

PDE-GCN

&88.62$_{\pm\text{1.03}}$
&79.98$_{\pm\text{0.97}}$      &89.92$_{\pm\text{0.38}}$
&93.24$_{\pm\text{2.03}}$

&39.76$_{\pm\text{0.74}}$
&89.73$_{\pm\text{1.35}}$

\\

GPRGNN     
&88.54$_{\pm\text{0.67}}$      &80.13$_{\pm\text{0.84}}$          &88.46$_{\pm\text{0.31}}$
&92.91$_{\pm\text{1.32}}$
&39.91$_{\pm\text{0.62}}$ 
&91.57$_{\pm\text{1.96}}$
 \\

BernNet     
&88.51$_{\pm\text{0.92}}$      &80.08$_{\pm\text{0.75}}$          &88.51$_{\pm\text{0.39}}$
&92.62$_{\pm\text{1.37}}$
&41.71$_{\pm\text{1.12}}$
&92.13$_{\pm\text{1.64}}$
\\

%DMP &86.52    &76.42          &86.52 &62.28 &47.42 &35.72 &89.19 &89.19      \\
EvenNet
&87.25$_{\pm\text{1.42}}$
&78.65$_{\pm\text{0.96}}$
&89.52$_{\pm\text{0.31}}$
&93.77$_{\pm\text{1.73}}$
&40.48$_{\pm\text{0.62}}$
&92.13$_{\pm\text{1.72}}$
\\

ChebNetII 
&88.71$_{\pm\text{0.93}}$      
&80.53$_{\pm\text{0.79}}$      
&88.93$_{\pm\text{0.29}}$
&93.28$_{\pm\text{1.47}}$
%  &71.37$_{\pm\text{1.01}}$
 &\textcolor{blue}{41.75$_{\pm\text{1.07}}$}

&92.30$_{\pm\text{1.48}}$

\\ 
JacobiConv   
&88.98$_{\pm\text{0.46}}$
% &88.98$_{\pm\text{0.46}}$   
&80.78$_{\pm\text{0.79}}$
% &80.78$_{\pm\text{0.79}}$          
&89.62$_{\pm\text{0.41}}$
&93.44$_{\pm\text{2.13}}$
% &\textcolor{blue}{93.44$_{\pm\text{2.13}}$}
% &93:44$_{\pm\text{2.13}}$ 
&41.17$_{\pm\text{0.64}}$

&92.95$_{\pm\text{2.46}}$
% &{\textcolor{blue}{92.95$_{\pm\text{2.46}}$}}
% &92.95$_{\pm\text{2.46}}$
\\
OptBasisGNN
&87.00$_{\pm\text{1.55}}$
% &88.98$_{\pm\text{0.46}}$   
&80.58$_{\pm\text{0.82}}$
% &80.78$_{\pm\text{0.79}}$          
&{\textcolor{blue}{90.30$_{\pm\text{0.19}}$}}
&93.48$_{\pm\text{1.68}}$
% &\textcolor{blue}{93.44$_{\pm\text{2.13}}$}
% &93:44$_{\pm\text{2.13}}$ 
&{\textbf{42.39$_{\pm\text{0.52}}$}}

&92.99$_{\pm\text{1.85}}$

\\ \midrule

SPCNet-$D$

  &{\textcolor{blue}{89.34$_{\pm\text{0.84}}$}}
 &\textcolor{blue}{82.16$_{\pm\text{0.57}}$}
 &89.57$_{\pm\text{0.57}}$
  &{\textcolor{blue}{93.93$_{\pm\text{1.28}}$}}
  &40.89$_{\pm\text{0.61}}$
 &{\textbf{94.25$_{\pm\text{1.92}}$}}

\\
SPCNet-$L$

  &{\textbf{89.97$_{\pm\text{0.74}}$}}
 &{\textbf{82.65$_{\pm\text{0.75}}$}}
 &{\textbf{91.82$_{\pm\text{0.36}}$}}
  &{\textbf{94.26$_{\pm\text{1.31}}$}}
  &40.23$_{\pm\text{0.70}}$
 &{\textcolor{blue}{93.62$_{\pm\text{2.13}}$}}

\\
\bottomrule
\end{tabular}
\label{tb:full}}
\end{table*}
\begin{table*}[t]
\centering
\caption{Node classification accuracy (\%) of heterophilic methods.}

\label{tab:result_heter}
\resizebox{1.0\textwidth}{!}{
\begin{tabular}{@{}lllll lll@{}}
%\hline
\toprule
Dataset & Cora & Citeseer & Pubmed & Texas & Actor & Cornell &Penn94\\ \midrule
     MLP     
     & 75.69$_{\pm\text{2.00}} $   
     & 74.02$_{\pm\text{1.90}} $  
     & 87.16$_{\pm\text{0.37}} $                
     & 80.81$_{\pm\text{4.75}} $       
     & 36.53$_{\pm\text{0.70}} $  
     & 81.89$_{\pm\text{6.40}} $
     %& 85.29$_{\pm\text{3.31}} $
     %& 73.61$_{\pm\text{0.40}} $
     %&9.19
     &73.61$_{\pm\text{0.40}} $

 \\
     GCN         
     & 86.98$_{\pm\text{1.27}} $ 
     & 76.50$_{\pm\text{1.36}} $    
     & 88.42$_{\pm\text{0.50}} $            
     & 55.14$_{\pm\text{5.16}} $          
     & 27.32$_{\pm\text{1.10}} $ 
     & 60.54$_{\pm\text{5.30}} $ 
    % & 51.76$_{\pm\text{3.06}} $
     %& 82.47$_{\pm\text{0.27}} $
    % &10.75
    & 82.47$_{\pm\text{0.27}} $
\\
     GAT       
     & 87.30$_{\pm\text{1.10}} $ 
     & 76.55$_{\pm\text{1.23}} $ 
     & 86.33$_{\pm\text{0.48}} $             
     & 52.16$_{\pm\text{6.63}} $          
     & 27.44$_{\pm\text{0.89}} $ 
     & 61.89$_{\pm\text{5.05}} $  
     %& 49.41$_{\pm\text{4.09}} $
     %& 81.53$_{\pm\text{0.55}} $
     %&11.13
     & 81.53$_{\pm\text{0.55}} $

 \\    
      MixHop      
      & 87.61$_{\pm\text{0.85}} $ 
      & 76.26$_{\pm\text{1.33}} $ 
      & 85.31$_{\pm\text{0.61}} $          
      & 77.84$_{\pm\text{7.73}} $         
      & 32.22$_{\pm\text{2.34}} $ 
      & 73.51$_{\pm\text{6.34}} $ 
      %& 75.88$_{\pm\text{4.90}} $
      %& 83.47$_{\pm\text{0.71}} $
      %&9.75
      & 83.47$_{\pm\text{0.71}} $

 % \\  
 %    FAGCN 
 %    & 88.05$_{\pm\text{1.57}} $ 
 %    & 77.07$_{\pm\text{2.05}} $ 
 %    & 88.09$_{\pm\text{1.38}} $          
 %    & 76.49$_{\pm\text{2.87}} $         

 %    & 34.82$_{\pm\text{1.35}} $ 
  
 %    & 76.76$_{\pm\text{5.87}} $  

\\
   
     GCNII     
     & 88.37$ _{\pm\text{1.25}}$ 
     & 77.33$ _{\pm\text{1.48}}$
     & \textbf{90.15$ _{\pm\text{0.43}}$ }       
     & 77.57 $_{\pm\text{3.83}} $          
     & 37.44$ _{\pm\text{1.30}} $  
     & 77.86$ _{\pm\text{3.79}} $
     %& 80.39$_{\pm\text{3.40}} $
    % & 82.92$_{\pm\text{0.59}} $
     %&5.25
     & 82.92$_{\pm\text{0.59}} $

 \\
    H$_2$GCN     
    & 87.87$_{\pm\text{1.20}} $ 
    & 77.11$_{\pm\text{1.57}} $  
    & 89.49$_{\pm\text{0.38}} $           
    & 84.86$_{\pm\text{7.23}} $          
    & 35.70$_{\pm\text{1.00}} $  
    & 82.70$_{\pm\text{5.28}} $ 
    %& 87.65$_{\pm\text{4.98}} $
    %& 81.31$_{\pm\text{0.60}} $
    %&6.19
    & 81.31$_{\pm\text{0.60}} $

 \\ 
    WRGAT     
    & 88.20$_{\pm\text{2.26}} $ 
    & 76.81$_{\pm\text{1.89}}  $  
    & 88.52$_{\pm\text{0.92}} $          
    & 83.62$_{\pm\text{5.50}}  $         
    & 36.53$_{\pm\text{0.77}}  $
    & 81.62$_{\pm\text{3.90}} $  
    %& 86.98$_{\pm\text{3.78}} $
   % & 74.32$_{\pm\text{0.53}} $
   % &6.69
   & 74.32$_{\pm\text{0.53}} $

 \\ 
    GPRGNN     
    & 87.95$_{\pm\text{1.18}} $ 
    & 77.13$_{\pm\text{1.67}} $ 
    & 87.54$_{\pm\text{0.38}} $          
    & 78.38$_{\pm\text{4.36}} $        
    & 34.63$_{\pm\text{1.22}} $  
    & 80.27$_{\pm\text{8.11}} $ 
    %& 82.94$_{\pm\text{4.21}} $
    %& 81.38$_{\pm\text{0.16}} $
    %&8.06
    & 81.38$_{\pm\text{0.16}} $

  \\ 
    GGCN      
    & 87.95$_{\pm\text{1.05}} $ 
    & 77.14$_{\pm\text{1.45}} $
    & 89.15$_{\pm\text{0.37}} $            
    & 84.86$_{\pm\text{4.55}} $        
    & 37.54$_{\pm\text{1.56}} $
    % & 85.68$_{\pm\text{6.63}} $
    & \textcolor{blue}{85.68$_{\pm\text{6.63}} $}
    %& 86.86$_{\pm\text{3.22}} $
   % & OOM
    %&5.50
    & OOM

  \\ 
       ACM-GCN          
        & 87.91$_{\pm\text{0.95}} $
        & 77.32$_{\pm\text{1.70}}  $ 
        & 90.00$_{\pm\text{0.52}} $
        & \textcolor{blue}{87.84 $_{\pm\text{4.40}}$ } 
        & 36.28$_{\pm\text{1.09}}  $   
        & 85.14$_{\pm\text{6.07}}  $ 
        %& \textcolor{blue}{88.43$_{\pm\text{3.22}} $}
        %& 82.52$_{\pm\text{0.96}} $
        %&4.00
        & 82.52$_{\pm\text{0.96}} $

   \\  
       LINKX      
       & 84.64$_{\pm\text{1.13}} $ 
       & 73.19$_{\pm\text{0.99}}  $ 
       & 87.86$_{\pm\text{0.77}} $          
       & 74.60$_{\pm\text{8.37}} $        

       & 36.10$_{\pm\text{1.55}} $ 

       & 77.84$_{\pm\text{5.81}}$ 
       %& 75.49$_{\pm\text{5.72}}$ 
       %& 84.71$_{\pm\text{0.52}} $
       %&9.63
       & 84.71$_{\pm\text{0.52}} $

  \\
      GloGNN++   
      & 88.33$_{\pm\text{1.09}} $
      & 77.22$_{\pm\text{1.78}} $ 
      & 89.24$_{\pm\text{0.39}} $ 
      & 84.05$_{\pm\text{4.90}}$       
      & 37.70$_{\pm\text{1.40}}$

      & \textbf{85.95$_{\pm\text{5.10}}$ } 
       %& 88.04$_{\pm\text{3.22}}$ 
       %& \textbf{85.74$_{\pm\text{0.42}}$ }
       %&\textcolor{blue}{3.00 }
       &\textbf{85.74$_{\pm\text{0.42}}$}

   \\  \midrule
       SPCNet-$D$   
        &\textbf{88.89$_{\pm\text{0.80}}$}
        &\textcolor{blue}{77.69$_{\pm\text{0.92}}$}
        &89.87$_{\pm\text{0.28}}$  
        &\textcolor{blue}{87.84$_{\pm\text{2.43}}$}
        &\textbf{37.83$_{\pm\text{0.72}}$}
        &80.81$_{\pm\text{2.98}}$
        &\textcolor{blue}{84.75$_{\pm\text{0.27}}$}
\\

       SPCNet-$L$   
        & \textcolor{blue}{88.79$_{\pm\text{0.82}}$ } 
        & \textbf{77.84$_{\pm\text{1.02}}$  }
        & \textcolor{blue}{90.04$_{\pm\text{0.32}}$  }
        & \textbf{88.10$_{\pm\text{2.97}} $}  

        & \textcolor{blue}{37.76$_{\pm\text{0.76}}$  }

        & 81.62$_{\pm\text{3.24}}$
        %& \textbf{88.63$_{\pm\text{2.75}} $}
        %& \textcolor{blue}{84.75$_{\pm\text{0.51}} $}
        %&\textbf{1.88}
        &82.69$_{\pm\text{0.22}}$

\\
 
\bottomrule

\end{tabular}}
\end{table*}
%\subsection{Node classification on real-world datasets}
\subsection{Datasets}
To ensure fairness, we choose benchmark datasets commonly employed in related research. Regarding the homophilic graph, we choose three well-established citation graphs: Cora, CiteSeer, and PubMed \cite{Cora}. Concerning the heterophilic graph, we choose three datasets, which encompass the WebKB3 Webpage graphs Texas, Cornell \footnote{http://www.cs.cmu.edu/afs/cs.cmu.edu/project/theo-11/www/wwkb}, as well as the Actor co-occurrence graph \cite{Gemo}. Furthermore, we choose Penn94 from Facebook 100 \footnote{https://archive.org/details/oxford-2005-facebook-matrix} to serve as a representative social network dataset \cite{LINKX}. The calculation of homophily $H(\mathcal{G})$ follows \cite{H2GCN}. A high value of $H(\mathcal{G})$ indicates high homophily. The statistics of these datasets are summarized in Table \ref{tab::datasets}.

In semi-supervised node classification, we divide the dataset using random seeds according to \cite{Chebyshev2}. Specifically, we choose 20 nodes from each class in three homophilic datasets (Cora, Citeser, and Pubmed) for training, allocate 500 nodes for validation, and reserve 1000 nodes for testing. Regarding the three heterophilic datasets (Texas, Actor, Cornell), we use sparse splitting, allocating 2.5\% for training, 2.5\% for validation, and 95\% for testing. We also adopt two popular partitioning methods that have been widely applied in node classification: random seed splits and fixed seed splits. The first method, introduced by \cite{GPRGNN}, uses random seeds for the split and finds application in spectral GNNs, PDE GNNs, traditional methods, and more. The second method employs the fixed split as provided in \cite{Global-hete}, a split commonly used in heterophilic methods. To ensure a fair comparison, we assess the performance of SPCNet under these different settings. 

For node classification using polynomial-based methods, we employ 10 random seeds, following \cite{GPRGNN,JacobiConv}, to partition the dataset into training/validation/test sets with ratios of 60/\%20\%/20\%. In the case of node classification employing heterophilic GNNs, we employ the predetermined split from \cite{Global-hete} to partition the dataset into training/validation/test sets at a ratio of 48\%/32\%/20\%.

\subsection{Baselines}
For polynomial-based methods, the selected methods are as follows. OptBasisGNN \cite{OptBasisGNN}, JacobiConv \cite{JacobiConv}, and EvenNet \cite{EvenNet} are the newest polynomial-based methods. In addition, we incorporate four competitive baselines for node classification: GCNII \cite{GCN2}, TWIRLS \cite{TWIRLS}, EGNN \cite{EGNN}, and PDE-GCN \cite{PDEGCN}. These methods comprehensively take advantage of topological information from various angles, encompassing GNN architecture, energy functions, and PDE GNNs. For methods dealing with heterophily, we select the following baselines: (1) spatial heterophilic GNNs: H$_2$GCN \cite{H2GCN}, WRGAT \cite{WRGAT},  GloGNN++ \cite{Global-hete}; (2) GNN with filterbank: GPRGNN \cite{GPRGNN} and ACM-GCN \cite{ACM-GCN}; (3) MLP-based methods: LINKX \cite{LINKX}; (4) scalable heterophilic GNNs: GCNII \cite{GCN2} and GGCN \cite{GGCN}.

\begin{table*}[t]
\centering
%\small
\caption{The mean accuracy (\%) of node classification under 5 different splits on robust test against Meta attack and MinMax attack with perturb ratio 0.20.}
\label{rt20}
\resizebox{0.75\textwidth}{!}{
\begin{tabular}{lcccccc}
\toprule Dataset & Meta-Cora & Meta-Citeseer & Meta-Acm & MinMax-Cora & MinMax-Citeseer & MinMax-Acm \\
\midrule MLP & 58.60 & 62.93 & 85.74 & 59.81 & 63.72 & 85.66 \\
GCN & 63.76 & 61.98 & 68.29 & 69.21 & 68.02 & 69.37 \\
GAT & 66.51 & 63.66 & 68.50 & 69.50 & 67.04 & 69.26 \\
GCNII & 66.57 & 64.23 & 78.53 & 73.01 & 72.26 & 82.90 \\
H2GCN & 71.62 & 67.26 & 83.75 & 66.76 & 69.66 & 84.84 \\
FAGCN & 72.14 & 66.59 & 85.93 & 64.90 & 66.33 & 81.49 \\
GPRGNN & 76.27 & 69.63 & 88.79 & 77.18 & 72.81 & 88.24 \\
\midrule RobustGCN & 60.38 & 60.44 & 62.29 & 68.53 & 63.16 & 61.60 \\
GNN-SVD & 64.83 & 64.98 & 84.55 & 66.33 & 64.97 & 81.08 \\
GNN-Jaccard & 68.30 & 63.40 & 67.81 & 72.98 & 68.43 & 69.03 \\
GNNGuard & 75.98 & 68.57 & 62.19 & 73.23 & 66.14 & 66.15 \\
ProGNN & 75.25 & 68.15 & 83.99 &77.91 & 72.26 & 73.51 \\
 PCNet &77.65 &\textcolor{blue}{72.80} &81.27 &77.32 &70.57 &82.71 \\
EvenNet & \textcolor{blue}{77.74} & 71.03 & \textcolor{blue}{89.78} & \textcolor{blue}{78.40} & \textbf{73.51} & \textcolor{blue}{89.80} \\
\midrule SPCNet-$D$ &\textbf{81.04} &\textbf{74.80} &\textbf{91.27} &\textbf{80.48} &\textcolor{blue}{73.50} &\textbf{89.97}\\
\bottomrule

\end{tabular}}
\end{table*}

\begin{figure*}[t]
\centering
  \includegraphics[width=1.0\textwidth]{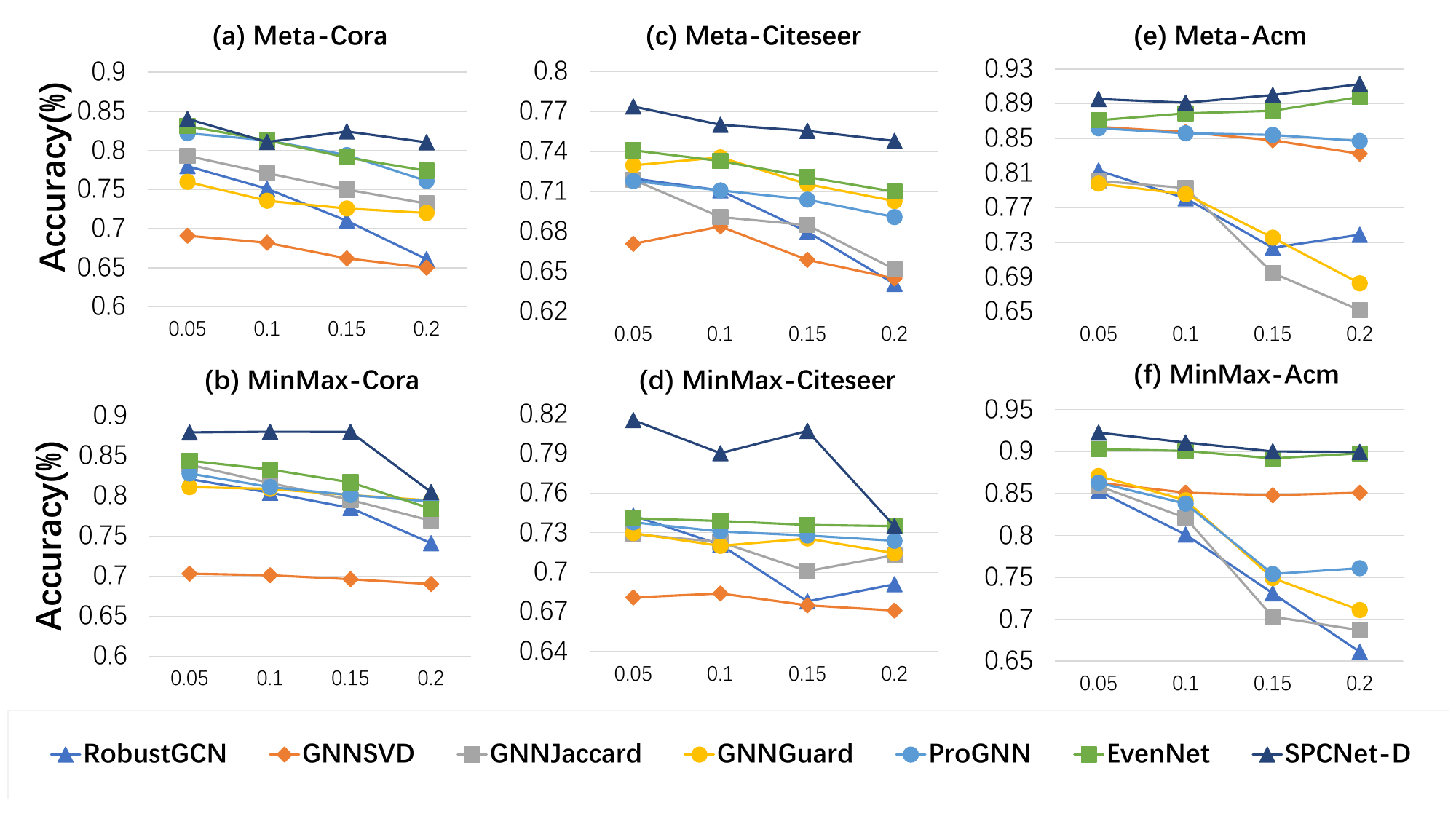}
  \caption{The average node classification accuracy (\%) with different perturb ratios.}
  %\Description{Time per epoch}
  \label{rt}
\end{figure*}

\begin{table*}[h]
\centering
\caption{ Ablation study on SPCNet.}
\resizebox{0.75\textwidth}{!}{
%\subfloat[]{
\begin{tabular}{lcccccc}
\toprule Dataset & Cora & Citeseer & Pubmed & Texas & Actor & Cornell \\
\midrule
SPCNet-GNN & 88.58$_{\pm\text{0.77}}$ & 77.40$_{\pm\text{0.96}}$ & 89.64$_{\pm\text{0.36}}$ & 87.16$_{\pm\text{2.84}}$ & 37.41$_{\pm\text{0.76}}$ & 79.21$_{\pm\text{3.13}}$ \\
SPCNet-Lin & 88.36$_{\pm\text{0.84}}$ & 76.73$_{\pm\text{1.36}}$ & 89.06$_{\pm\text{0.32}}$ & 87.11$_{\pm\text{3.93}}$ & 37.27$_{\pm\text{0.74}}$ & 80.00$_{\pm\text{3.01}}$ \\
SPCNet-$D$ & \textbf{88.89$_{\pm\text{0.80}}$} & \textbf{77.69$_{\pm\text{0.92}}$} & \textbf{89.87$_{\pm\text{0.28}}$} & \textbf{87.84$_{\pm\text{2.43}}$} & \textbf{37.83$_{\pm\text{0.72}}$} & \textbf{80.81$_{\pm\text{2.98}}$} \\
\bottomrule
\end{tabular}
\label{tab::abl1}
}
\end{table*}

%\vspace{10pt} % 添加垂直间距
\begin{table*}[h]
\centering
\caption{  Results of node classification on PCNet and SPCNet with sparse split.}
\resizebox{.75\textwidth}{!}{
\begin{tabular}{lcccccc}
\toprule Dataset & Cora & Citeseer & Pubmed & Texas & Actor & Cornell \\
\midrule
PCNet & \textbf{82.81$_{\pm\text{0.50}}$} & 69.92$_{\pm\text{0.70}}$ & 80.01$_{\pm\text{0.88}}$ & 64.56$_{\pm\text{1.84}}$ & 33.56$_{\pm\text{0.40}}$ & 52.08$_{\pm\text{4.45}}$ \\
SPCNet-$D$ & 80.40$_{\pm\text{0.61}}$ & \textbf{71.53$_{\pm\text{0.29}}$} & \textbf{81.32$_{\pm\text{0.83}}$} & \textbf{66.32$_{\pm\text{7.08}}$} & \textbf{35.35$_{\pm\text{0.33}}$} & \textbf{53.97$_{\pm\text{3.75}}$} \\
\bottomrule
\label{semi2}
\end{tabular}
}
\end{table*}
\begin{comment}
    
\vspace{10pt}

\resizebox{0.75\textwidth}{!}{
\subfloat[]{
\begin{tabular}{lcccccc}
\toprule Dataset & Meta-cora & Meta-citeseer & Meta-acm & MM-cora & MM-citeseer & MM-acm \\
\midrule PCNet &77.65 &72.80 &81.27 &77.32 &70.57 &82.71 \\
 SPCNet-$D$ &\textbf{81.04} &\textbf{74.80} &\textbf{91.27} &\textbf{80.48} &\textbf{73.50} &\textbf{89.97}\\
\bottomrule
\label{rt202}
\end{tabular}
}
}
\end{comment}

\subsection{Results}
Table \ref{semi} shows the results on semi-supervised node classification, it can be seen that SPCN-Conv has dominant performance on all datasets. In particular, with the sparse split, the performance of several methods varies significantly depending on the chosen training set. On the contrary, PCNet exhibits minimal fluctuations, indicating its stability. PCNet-$L$ can have better performance than SPCNet-$D$, showing that PCNet-$L$ has a better ability to learn information from very few data.

Table \ref{acc_res} shows our results compared to polynomial-based methods. It can be seen that our method achieves the best performance in most cases and is competitive on Actor. SPCNet's superior performance verifies that our proposed filter can better capture meaningful information on both homophilic and heterophilic graphs. Compared to the state-of-the-art graph filter with a learnable basis, OptBasisGNN, our method outperforms it in most cases. Compared to the most recent methods, BernNet and JacobiConv, our method is designed based on both a local and a global view, which has been shown to enhance the performance. Furthermore, SPCNet-$L$ also performs better than SPCNet-$D$ in most cases, which shows that the learnable $k$ can generally adapt to different datasets.

Table \ref{tab:result_heter} presents the results of the heterophilic methods. Our method performs better than recent heterophilic methods on 5 out of 7 datasets, which validates our cross-receptive filter. Thus, SPCNet can successfully unify homophilic and heterophilic datasets. Our method only performs poorly on Cornell; this may be due to Cornell having very few nodes and edges, making it difficult to extract meaningful information with our filter. We can also see that MLP performs better than traditional GNNs, which shows that attribute information is more meaningful than topology information on heterophilic datasets. Thus, the identity mapping is helpful. SPCNet-$D$ and SPCNet-$L$ produce close performance. %have the best performance on different datasets irregularly, which shows them both can perform better than the other in different situations. One disadvantage of SPCNet-$L$ is that it lacks favorable convergence properties with learnable $k$, even the loss function is non-convex.

\subsection{Robustness test}
In this section, we follow the experiments in \cite{EvenNet} to verify the robustness of our model. We apply non-targeted adversarial attacks to the structures, which are Meta-gradients (Meta) \cite{zugner_adversarial_2019} and MinMax \cite{xu2019topology} attacks. Meta attack regards the topology structure as hyperparameters and calculates the gradient of the attack loss function when backpropagating with differentiable model. MinMax attack is obtained by optimizing the convex relaxation with Boolean variables. They are trained to degrade the capability of a surrogate GNN model. Several changes, represented by the perturb ratio, are applied to the graph structure. The perturb ratio is set to 20$\%$ and there are 5 random splits in our experiment. These attacks also enlarge the homophily differences between training and testing graphs. 

Datasets include: Cora, Citeseer, and Acm \cite{wang2019heterogeneous}. The information of Acm can be seen in Table \ref{tab::datasets}. As for the baselines, five defense models that are robust to adversarial attacks are also included, which are: RobustGCN \cite{zhu2019robust}, GNN-SVD \cite{entezari2020all}, GNN-Jaccard \cite{wuadversarial}, GNNGuard \cite{zhang2020gnnguard}, and ProGNN \cite{jin2020graph}.

\textbf{Results}. The results with perturb ratio 0.20 are illustrated in Table \ref{rt20}. It can be seen that our method can dominate in nearly all cases. Most baselines face a dramatic decrease in performance. EvenNet is the most recent polynomial-based robust method, and our method surpasses it considerably in most datasets. In addition, we also test the performance with different perturb ratios. The results are shown in Fig. \ref{rt}. We also see the dominating performance of our model versus many purposely designed models. Thus, in addition to having high performance, our method can also be robust to different noises, making it applicable in the real world. This is consistent with our robustness analysis.

% \begin{comment}
% \begin{table*}[t]
% \centering
% %\small
% \caption{The average node classification accuracy (\%) results on rubust test.}
% \label{rt}
% \resizebox{0.9\textwidth}{!}{
% \begin{tabular}{lcccccc}
% \toprule Dataset & Meta-cora & Meta-citeseer & Meta-acm & MM-cora & MM-citeseer & MM-acm \\
% \midrule MLP & 58.60 & 62.93 & 85.74 & 59.81 & 63.72 & 85.66 \\
% GCN & 63.76 & 61.98 & 68.29 & 69.21 & 68.02 & 69.37 \\
% GAT & 66.51 & 63.66 & 68.50 & 69.50 & 67.04 & 69.26 \\
% GCNII & 66.57 & 64.23 & 78.53 & 73.01 & 72.26 & 82.90 \\
% H2GCN & 71.62 & 67.26 & 83.75 & 66.76 & 69.66 & 84.84 \\
% FAGCN & 72.14 & 66.59 & 85.93 & 64.90 & 66.33 & 81.49 \\
% GPRGNN & 76.27 & 69.63 & 88.79 & 77.18 & 72.81 & 88.24 \\
% \midrule RobustGCN & 60.38 & 60.44 & 62.29 & 68.53 & 63.16 & 61.60 \\
% GNN-SVD & 64.83 & 64.98 & 84.55 & 66.33 & 64.97 & 81.08 \\
% GNN-Jaccard & 68.30 & 63.40 & 67.81 & 72.98 & 68.43 & 69.03 \\
% GNNGuard & 75.98 & 68.57 & 62.19 & 73.23 & 66.14 & 66.15 \\
% ProGNN & 75.25 & 68.15 & 83.99 &77.91 & 72.26 & 73.51 \\
% EvenNet & \textcolor{blue}{77.74} & \textcolor{blue}{71.03} & \textcolor{blue}{89.78} & \textcolor{blue}{78.40} & \textbf{73.51} & \textcolor{blue}{89.80} \\
% \midrule SPCNet-$D$ &\textbf{81.04} &\textbf{74.80} &\textbf{91.27} &\textbf{80.48} &\textcolor{blue}{73.50} &\textbf{89.97}\\
% \bottomrule

% \end{tabular}}
% \end{table*}

\section{Ablation Study}

To understand the effect of the dimensional transformation, we replace SPCNet's MLP with a single linear layer, which is denoted as SPCNet-Lin. From Table \ref{tab::abl1}, we can see that the performance of SPCNet-Lin and SPCNet-$D$ is very close (less than 1\%). This indicates that our filter has a strong fitting capability.%and it shows that our model is not sensitive to dimensional transformation.

We also remove the identity mapping in our model and denote it as SPCNet-GNN. According to Table \ref{tab::abl1}, performance decreases in all cases, which verifies the importance of the original characteristic. However, the degradation is slight, and thus our proposed filter itself plays a key role.

\begin{table}[h]
% \vskip -0.15in
%\vspace{-4mm}
\centering
\caption{Node classification results of PCNet and SPCNet with dense split (the per-epoch time/total training time (ms) ).}\label{tab::abl2}
% \begin{small}
% \vskip 0.15in
\begin{center}
\resizebox{.5\textwidth}{!}{
\begin{tabular}{ l cccc}

\toprule
Datasets &PCNet &SPCNet-$D$ \\   %&SPCNet-$L$
\midrule

Cora 
&88.41$_{\pm\text{0.66}}$ (3.74/850)
&\textbf{88.89$_{\pm\text{0.80}}$} (2.96/590)
\\
Citeseer
 
&77.50$_{\pm\text{1.06}}$ (3.50/740)
&\textbf{77.69$_{\pm\text{0.92}}$} (3.18/650)
 \\
Pubmed 

&89.51$_{\pm\text{0.28}}$ (3.88/830)
&\textbf{89.87$_{\pm\text{0.28}}$} (3.64/780)

\\
Texas 

&\textbf{88.11$_{\pm\text{2.17}}$} (3.84/900)
&87.84$_{\pm\text{2.43}}$ (3.15/640)

\\
Actor 

&37.80$_{\pm\text{0.64}}$ (3.55/1340)
&\textbf{37.83$_{\pm\text{0.72}}$} (3.19/670)

\\

Cornell 

&\textbf{82.16$_{\pm\text{2.70}} $} (3.80/780)
&80.81$_{\pm\text{2.98}} $ (3.34/700)
%&81.62$_{\pm\text{3.24}} $

\\

Penn94
&82.93$_{\pm\text{0.18}} $ (4.32/2820)
&\textbf{84.75$_{\pm\text{0.27}}$} (4.09/2560)
\\
\bottomrule
\end{tabular}}
%\end{small}
\end{center}
% \vspace{-8mm}
\end{table}

\section{Comparison with PCNet}
As a simplified version of PCNet \cite{li2024pc}, we compare with it to verify SPCNet's advantages.
\begin{comment}
PCNet is the following filter:
\begin{equation}
\begin{aligned} 
Z^* &=h_{k,t}(\tilde{L})X=U(\beta_{0}+\sum_{k=1}^{K} \beta_{k} \cdot g_{k,t}(\tilde{\lambda}))U^{\top} X\\
   % &=(\theta_{0}\cdot I+\sum_{k=1}^{K} \theta_{k} \cdot g_{k,t}(\tilde{L}))X
    &=\beta_{0} \cdot X +\sum_{k=1}^{K} \beta_{k} \cdot \sum_{n=0}^{N} C_{n}\left(k, t\right) \frac{(-\tilde{L})^{n}}{n !}X
\end{aligned}
\label{pcconv_old}
\end{equation}
where $\beta_{0}, \beta_{1},..., \beta_{K}$ are learnable parameters and are initialized as $\{0.5, 1, 1.5, 2\}$. $K$ is defined as an integer value from $\{1, 2, ..., 10\}$. Using basis in this way is consistent with traditional polynomial-based methods. 
\end{comment}
We conduct experiments on node classification using semi-supervised node classification methodology. The results are displayed in Table \ref{semi2}. Though PCNet has more hyperparameters, SPCNet-$D$ outperforms it in 5 out of 6 datasets. With the exception of Cora, we observe at least  1\% improvement in performance.  We also conduct a comparison using dense split methodology, and the corresponding results are presented in Table \ref{tab::abl2}. Similarly to the sparse split scenario, SPCNet-$D$ exhibits superior performance on 4 of 6 data sets. This confirms that SPCNet generally works better than PCNet. The running time is also reported. It is evident that SPCNet is more efficient in all cases. These results unequivocally demonstrate the clear advantages of our proposed method over PCNet. %The major difference between them is the the design of the filter approximation

%The efficacy of our adaptive $k$ approach is evident, showcasing not only robustness and efficiency but also superior performance compared to PCNet. This substantiates the multifaceted superiority of our proposed adaptive $k$ method, affirming its practicality and effectiveness in diverse scenarios.

\section{Conclusion}
To address heterophily problem in graph neural networks, we propose a novelty model named SPCNet. We design a cross-receptive filter that captures local and global graph structure information. Specifically, the global filter tackles heterophily by treating even- and odd-hop neighbors differently. We approximate it by the Poisson-Charlier polynomial to make it have linear complexity. Theoretical analysis proves its robustness. Comprehensive experiments verify its promissing  performance over state-of-the-art methods on real-world datasets. %One limitation of our work is that SPCNet-$L$ lacks favorable convergence properties. We leave this for future work.

\bibliographystyle{IEEEtran}
\bibliography{PC}

\end{document}